\definecolor{newcolor}{rgb}{.8,.349,.1}
\journal{Pattern Recognition Letters}
\newtheorem{theorem}{Theorem}
\begin{document}

\ifpreprint
  \setcounter{page}{1}
\else
  \setcounter{page}{1}
\fi

\begin{frontmatter}

\title{Rank consistent ordinal regression for neural networks with application to age estimation}

\author[1]{Wenzhi \snm{Cao}} 
\author[2]{Vahid \snm{Mirjalili}}
\author[1]{Sebastian \snm{Raschka}\corref{cor1}}
\cortext[cor1]{Corresponding author:}
\ead{sraschka@wisc.edu}

\address[1]{University of Wisconsin-Madison, Department of Statistics, 1300 University Ave, Madison, WI 53705, USA}
\address[2]{Michigan State University, Department of Computer Science \& Engineering, 428 South Shaw Lane, East Lansing, MI 48824, USA}


\begin{abstract}
In many real-world prediction tasks, class labels include information about the relative ordering between labels, which is not captured by commonly-used loss functions such as multi-category cross-entropy. Recently, the deep learning community adopted ordinal regression frameworks to take such ordering information into account. Neural networks were equipped with ordinal regression capabilities by transforming ordinal targets into binary classification subtasks. However, this method suffers from inconsistencies among the different binary classifiers. 
To resolve these inconsistencies, we propose the COnsistent RAnk Logits (CORAL) framework with strong theoretical guarantees for rank-monotonicity and consistent confidence scores. Moreover, the proposed method is architecture-agnostic and can extend arbitrary state-of-the-art deep neural network classifiers for ordinal regression tasks. The empirical evaluation of the proposed rank-consistent method on a range of face-image datasets for age prediction shows a substantial reduction of the prediction error compared to the reference ordinal regression network.
\end{abstract}

\begin{keyword}
\MSC 41A05\sep 41A10\sep 65D05\sep 65D17
\KWD Deep learning\sep Ordinal regression\sep Convolutional neural networks \sep Age prediction \sep Machine learning \sep biometrics
\end{keyword}

\end{frontmatter}


\section{Introduction}
\label{sec:introduction}

Ordinal regression (also called ordinal classification), describes the task of predicting labels on an ordinal scale. Here, a ranking rule or classifier $h$ maps each object $\mathbf{x}_i \in \mathcal{X}$ into an ordered set ${h: \mathcal{X} \rightarrow \mathcal{Y}}$, where ${\mathcal{Y}=\{r_1  \prec ... \prec r_K\}}$. In contrast to classification, the labels provide enough information to order objects. However, as opposed to metric regression, the difference between label values is arbitrary.

While the field of machine learning has developed many powerful algorithms for predictive modeling, most algorithms have been designed for classification tasks. The extended binary classification approach proposed by~\cite{li2007ordinal} forms the basis of many ordinal regression implementations. However, neural network-based implementations of this approach commonly suffer from classifier inconsistencies among the binary rankings~\citep{niu2016ordinal}.
This inconsistency problem among the predictions of individual binary classifiers is illustrated in Figure~\ref{fig:inconsistency-issue}.
We propose a new method and theorem for guaranteed classifier consistency that can easily be implemented in various neural network architectures.

Furthermore, along with the theoretical rank-consistency guarantees, this paper presents an empirical analysis of our approach to challenging real-world datasets for predicting the age of individuals from face images using our method with convolutional neural networks (CNNs). Aging can be regarded as a non-stationary process since age progression effects appear differently depending on the person's age. During childhood, facial aging is primarily associated with changes in the shape of the face, whereas aging during adulthood is defined mainly by changes in skin texture~\citep{ramanathan2009age,niu2016ordinal}. Based on this assumption, age prediction can be modeled using ordinal regression-based approaches~\citep{yang2010ranking,chang2011ordinal,cao2012human,li2012learning}.

The main contributions of this paper are as follows:

\begin{enumerate}
    \item The consistent rank logits (CORAL) framework for ordinal regression with theoretical guarantees for classifier consistency;
    \item Implementation of CORAL to adapt common CNN architectures, such as ResNet~\citep{he2016deep}, for ordinal regression;
    \item Experiments on different age estimation datasets showing that CORAL's guaranteed binary classifier consistency improves predictive performance compared to the reference framework for ordinal regression.
\end{enumerate}

Note that this work focuses on age estimation to study the proposed method's efficacy for ordinal regression. However, the proposed technique can be used for other ordinal regression problems, such as crowd-counting, depth estimation, biological cell counting, customer satisfaction, and others.

\begin{figure}
\begin{center}
\centerline{\includegraphics[width=0.98\linewidth]{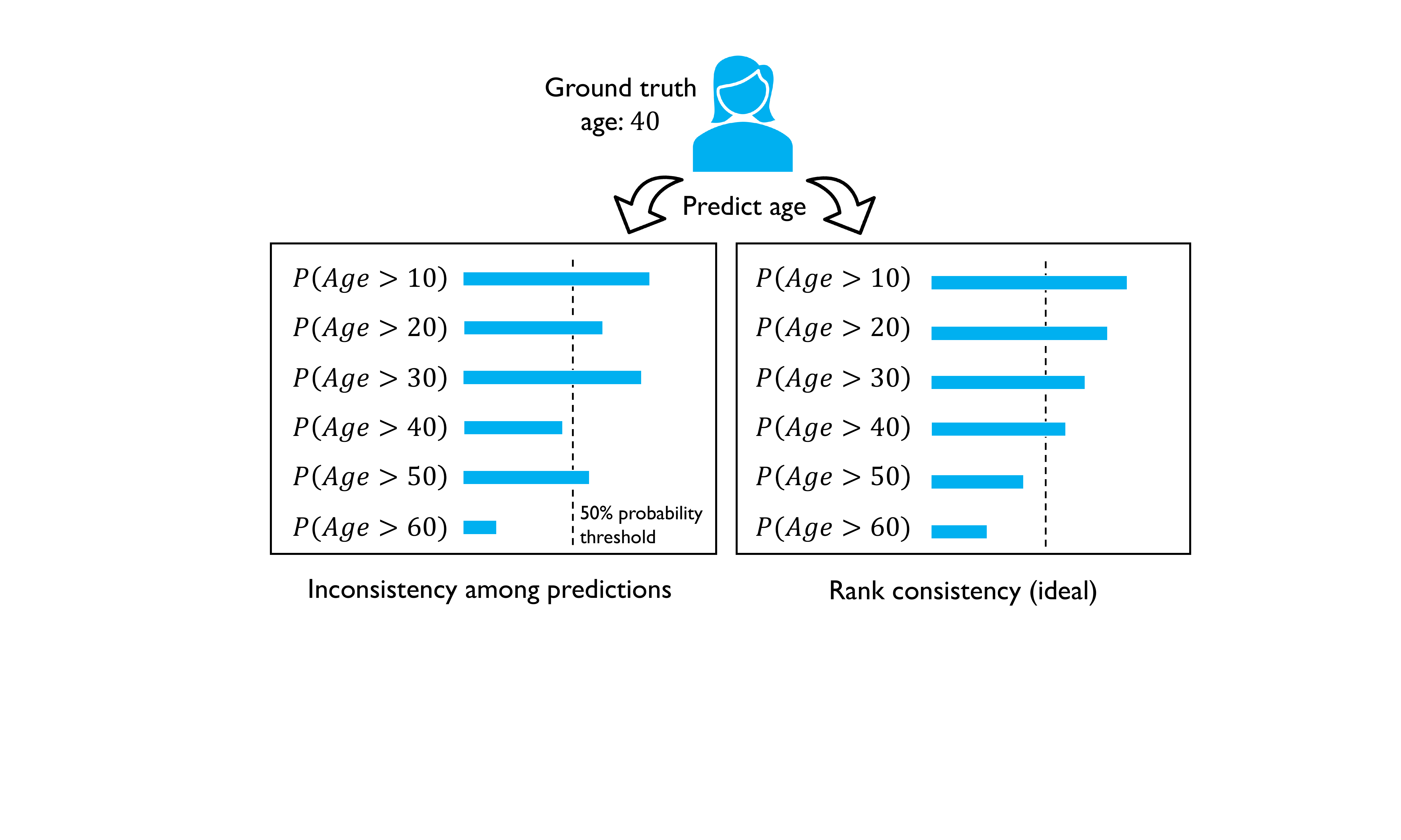}}
\caption{Schematic illustration of inconsistencies that can occur among individual classifiers in the general reduction framework from ordinal regression to binary classification: a rank-inconsistent model (left) versus a rank-consistent model where the probabilities decrease consistently (right)}
\label{fig:inconsistency-issue}
\end{center}
\end{figure}

\section{Related work}
\label{sec:related-work}

\subsection{Ordinal regression and ranking}

Several multivariate extensions of generalized linear models have been developed for ordinal regression in the past, including the popular proportional odds and proportional hazards models~\citep{mccullagh1980regression}. Moreover, the machine learning field developed ordinal regression models based on extensions of well-studied classification algorithms, by reformulating the problem to utilize multiple binary classification tasks~\citep{baccianella2009evaluation}. Early work in this regard includes the use of perceptrons~\citep{crammer2002pranking,shen2005ranking} and support vector machines~\citep{herbrich1999support,shashua2003ranking,rajaram2003classification,chu2005new}. \cite{li2007ordinal} proposed a general reduction framework that unified the view of a number of these existing algorithms.

\subsection{Ordinal regression CNN}

While earlier works on using CNNs for ordinal targets have employed conventional classification approaches~\citep{levi2015age,rothe2015dex}, the general reduction framework from ordinal regression to binary classification by \cite{li2007ordinal} was recently adopted by \cite{niu2016ordinal} as \textit{Ordinal Regression CNN} (OR-CNN). In the OR-CNN approach, an ordinal regression problem with $K$ ranks is transformed into $K-1$ binary classification problems, with the $k$-th task predicting whether the age label of a face image exceeds rank $r_k$, ${k=1,...,K-1}$. All $K-1$ tasks share the same intermediate layers but are assigned distinct weight parameters in the output layer. 

While the OR-CNN was able to achieve state-of-the-art performance on benchmark datasets, it does not guarantee consistent predictions, such that predictions for individual binary tasks may disagree. For example, in an age estimation setting, it would be contradictory if the $k$-th binary task predicted that the age of a person was more than 30, but a previous task predicted the person's age was less than 20. This inconsistency could be suboptimal when the $K-1$ task predictions are combined to obtain the estimated age.

\cite{niu2016ordinal} acknowledged the classifier inconsistency as not being ideal and also noted that ensuring the $K-1$ binary classifiers are consistent would increase the training complexity substantially~\citep{niu2016ordinal}. The CORAL method proposed in this paper addresses both these issues with a theoretical guarantee for classifier consistency and without increasing the training complexity.

\begin{figure*}
\begin{center}
\centerline{\includegraphics[width=0.98\linewidth]{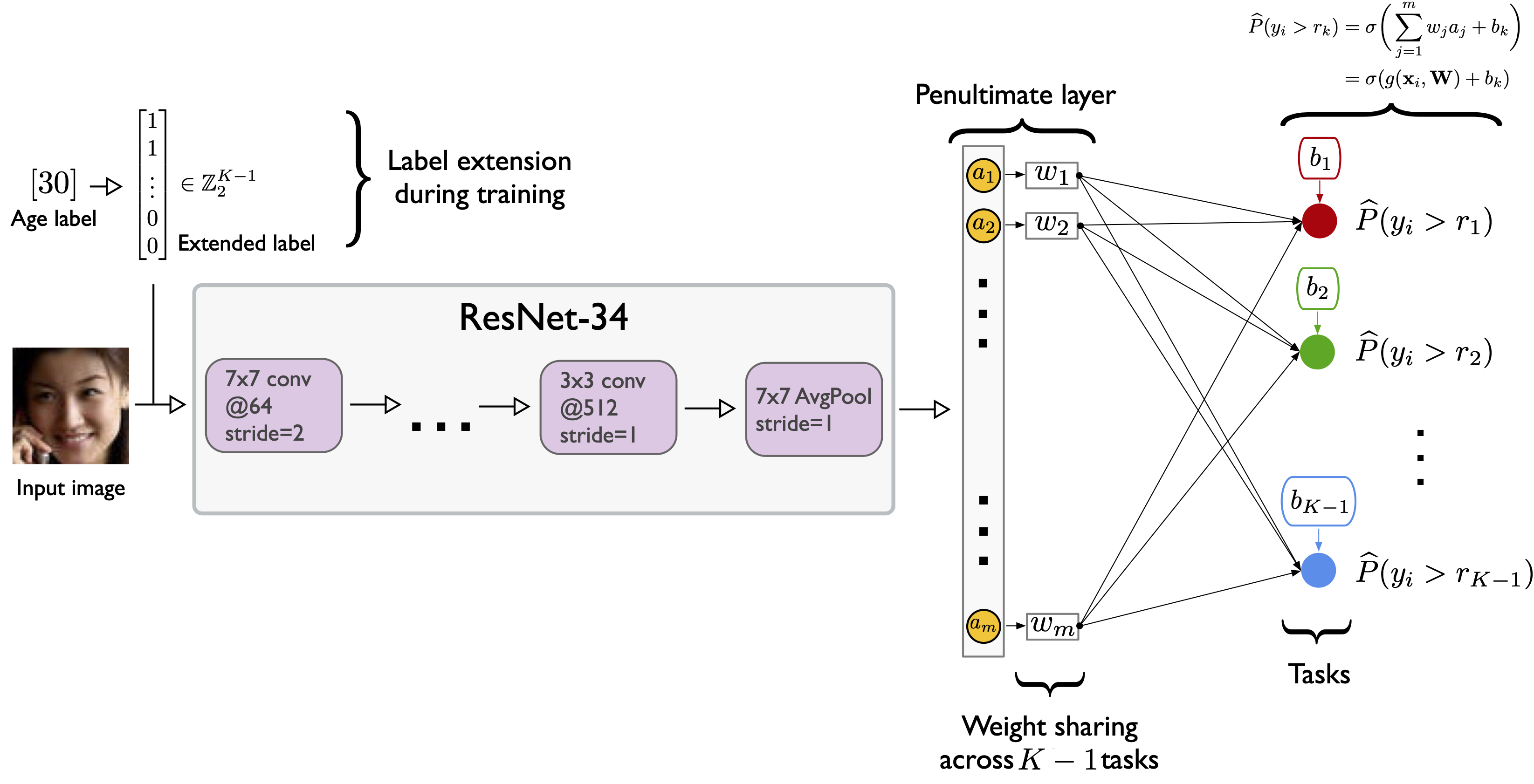}}
\caption{Illustration of the consistent rank logits CNN (CORAL-CNN) used for age prediction. From the estimated probability values, the binary labels are obtained via Eq.~\ref{eq:proba-to-binary} and converted to the age label via Eq.~\ref{eq:predicted-label}.}
\label{fig:resnet}
\end{center}
\end{figure*}

\subsection{Other CNN architectures for age estimation}

\cite{chen2017using} proposed a modification of the OR-CNN~\citep{niu2016ordinal}, known as Ranking-CNN, that uses an ensemble of CNNs for binary classifications and aggregates the predictions to estimate the age label of a given face image.  The researchers showed that training an ensemble of CNNs improves the predictive performance over a single CNN with multiple binary outputs~\citep{chen2017using}, which is consistent with the well-known fact that an ensemble model can achieve better generalization performance than each individual classifier in the ensemble~\citep{raschka2019pyml3}.

Recent research has also shown that training a multi-task CNN that shares lower-layer parameters for various face analysis tasks (face detection, gender prediction, age estimation, etc.) can improve the overall performance across different tasks compared to a single-task CNN~\citep{ranjan2017all}. 

Another approach for utilizing binary classifiers for ordinal regression is the siamese CNN architecture proposed by~\cite{polania2018ordinal}, which computes the rank from pair-wise comparisons between the input image and multiple, carefully selected anchor images.

\section{Proposed method}
\label{sec:proposed}

This section describes our proposed CORAL framework that addresses the problem of classifier inconsistency in the OR-CNN by~\cite{niu2016ordinal}, which is based on multiple binary classification tasks for ranking.

\subsection{Preliminaries}

Let ${D=\{\mathbf{x}_i,y_i\}_{i=1}^N}$ be the training dataset consisting of $N$ training examples. Here, $\mathbf{x}_i\in \mathcal{X}$ denotes the $i$-th training example and $y_i$ the corresponding rank, where ${y_i\in \mathcal{Y}=\{r_1,r_2,...r_K\}}$ with ordered rank ${r_K\succ r_{K-1}\succ \ldots\succ r_1}$.  The ordinal regression task is to find a ranking rule $h: \mathcal{X}\rightarrow \mathcal{Y}$ such that a loss function $L(h)$ is minimized. 

Let $\mathcal{C}$ be a $K\times K$ \emph{cost matrix}, where $\mathcal{C}_{y,r_k}$ is the cost of predicting an example $(\mathbf{x},y)$ as rank $r_k$~\citep{li2007ordinal}. Typically, $\mathcal{C}_{y,y}=0$ and $\mathcal{C}_{y,r_k}>0$ for $y\neq r_k$. In ordinal regression, we generally prefer each row of the cost matrix to be \emph{V-shaped}, that is, ${\mathcal{C}_{y,r_{k-1}}\geq \mathcal{C}_{y,r_k}}$ if $r_{k}\leq y$  and ${\mathcal{C}_{y,r_k}\leq \mathcal{C}_{y,r_{k+1}}}$ if $r_{k}\geq y$. The \emph{classification} cost matrix has entries ${\mathcal{C}_{y,r_k}=\mathbbm{1}\{y\neq r_k\}}$ that do not consider ordering information. In ordinal regression, where the ranks are treated as numerical values, the \emph{absolute} cost matrix is commonly defined by ${\mathcal{C}_{y,r_k}=|y-r_k|}$.

\cite{li2007ordinal} proposed a general reduction framework for extending an ordinal regression problem into several binary classification problems. This framework requires a cost matrix that is convex in each row (${\mathcal{C}_{y,r_{k+1}}-\mathcal{C}_{y,r_k}\geq \mathcal{C}_{y,r_{k}}-\mathcal{C}_{y,r_{k-1}}}$ for each $y$) to obtain a rank-monotonic threshold model. Since the cost-related weighting of each binary task is specific for each training example, this approach is considered as infeasible in practice due to its high training complexity~\citep{niu2016ordinal}. 

Our proposed CORAL framework does neither require a cost matrix with convex-row conditions nor explicit weighting terms that depend on each training example to obtain a rank-monotonic threshold model and produce consistent predictions for each binary task.

\subsection{Ordinal regression with a consistent rank logits model}
 
In this section, we describe our proposed consistent rank logits (CORAL) framework for ordinal regression. Subsection~\ref{sec:label-extension} describes the label extension into binary tasks used for rank prediction. The loss function of the CORAL framework is described in Subsection~\ref{sec:loss}. In subsection~\ref{sec:theoretical-guarantees}, we prove the theorem for rank consistency among the binary classification tasks that guarantee that the binary tasks produce consistently ranked predictions.

\subsubsection{Label extension and rank prediction} 
\label{sec:label-extension}

Given a training dataset $D=\{\mathbf{x}_i,y_i\}_{i=1}^N$, a rank  $y_i$ is first extended into $K-1$ binary labels $y_i^{(1)},\ldots,y_i^{(K-1)}$ such that ${y_i^{(k)} \in \{0,1\}}$ indicates whether $y_i$ exceeds rank $r_k$, for instance, ${y_i^{(k)}=\mathbbm{1}\{y_i>r_k\}}$. The indicator function $\mathbbm{1}\{\cdot\}$ is $1$ if the inner condition is true and $0$ otherwise. Using the extended binary labels during model training, we train a single CNN with $K-1$ binary classifiers in the output layer, which is illustrated in Figure~\ref{fig:resnet}. 

Based on the binary task responses, the predicted rank label for an input $\mathbf{x}_i$ is obtained via $h(\mathbf{x}_i)=r_q$. The rank index\footnote{While the rank label $r_q$ is application-specific and defined by the user, for example {$r_q\in\{"bad", "okay", "good"\}$} or {$r_q\in\{18\,years, 19\,years, ... 70\,years\}$}, the rank index $q$ is an integer in the range $\{1, 2, ..., K\}$.} $q$ is given by
\begin{equation} \label{eq:predicted-label}
q=  1 + \sum_{k=1}^{K-1} f_k(\mathbf{x}_i),
\end{equation}
where $f_k(\mathbf{x}_i)\in \{0,1\}$ is the prediction of the $k$-th binary classifier in the output layer. We require that $\{f_k\}_{k=1}^{K-1}$ reflect the ordinal information and are  \emph{rank-monotonic}, ${f_1(\mathbf{x}_i)\geq f_2(\mathbf{x}_i) \geq \ldots \geq f_{K-1}(\mathbf{x}_i)}$, which guarantees consistent predictions. To achieve rank-monotonicity and guarantee binary classifier consistency (Theorem~\ref{th:ordered_thres}), the $K-1$ binary tasks share the same weight parameters\footnote{To provide further intuition for the weight sharing requirement, we may consider a simplified version, that is, the linear form $logit(p_i) = w x + b_i$ or $p_i = \sigma(w x + b_i)$ with a single feature $x$. If the weight $w$ is not shared across the $K-1$ equations, the S-shaped curves of the probability scores $p_i$ will intersect, making the p_i`s non-monotone at some given input $x$. Only if $w$ is shared across the $K-1$ equations, the S-shaped curves are horizontally shifted without intersecting.} but have independent bias units (Figure~\ref{fig:resnet}).

\subsubsection{Loss function} 
\label{sec:loss}

Let $\mathbf{W}$ denote the weight parameters of the neural network excluding the bias units of the final layer. The penultimate layer, whose output is denoted as $g(\mathbf{x}_i, \mathbf{W})$, shares a single weight with all nodes in the final output layer; $K-1$ independent bias units are then added to $g(\mathbf{x}_i,\mathbf{W})$ such that ${\{g(\mathbf{x}_i,\mathbf{W})+b_k\}_{k=1}^{K-1}}$ are the inputs to the corresponding binary classifiers in the final layer. Let 
\begin{equation}
\sigma(z)=1/(1+\exp(-z))
\end{equation}
be the logistic sigmoid function. The predicted empirical probability for task $k$ is defined as
\begin{dmath}
\widehat{P}(y_i^{(k)}=1)=\sigma(g(\mathbf{x}_i,\mathbf{W})+b_k).
\end{dmath}

For model training, we minimize the loss function
\begin{dmath}\label{eq:loss_fun}
    L(\mathbf{W},\mathbf{b})=\\
    - \sum_{i=1}^N\sum_{k=1}^{K-1} \lambda^{(k)}
    \big[ \log(\sigma(g(\mathbf{x}_i,\mathbf{W})+b_k))y_i^{(k)}  \\
    + \log(1-\sigma(g(\mathbf{x}_i,\mathbf{W})+b_k))(1-y_i^{(k)}) \big],
\end{dmath}

which is the weighted cross-entropy of $K-1$ binary classifiers. For rank prediction (Eq.~\ref{eq:predicted-label}), the binary labels are obtained via
\begin{equation} \label{eq:proba-to-binary}
    f_k(\mathbf{x}_i) =\mathbbm{1}\{\widehat{P}(y_i^{(k)}=1)>0.5\}.
\end{equation}

In Eq.~\ref{eq:loss_fun}, $\lambda^{(k)}$ denotes the weight of the loss associated with the $k$-th classifier (assuming $\lambda^{(k)}>0$). In the remainder of the paper, we refer to $\lambda^{(k)}$ as the importance parameter for task $k$. Some tasks may be less robust or harder to optimize, which can be considered by choosing a non-uniform task weighting scheme. For simplicity, we carried out all experiments with uniform task weighting, that is, $\forall k: \lambda^{(k)} = 1$. In the next section, we provide the theoretical guarantee for classifier consistency under uniform and non-uniform task importance weighting given that the task importance weights are positive numbers.

\subsubsection{Theoretical guarantees for classifier consistency} 
\label{sec:theoretical-guarantees}

The following theorem shows that by minimizing the loss $L$ (Eq.~\ref{eq:loss_fun}), the learned bias units of the output layer are non-increasing such that 

\begin{equation}
{b_1\geq b_2\geq \ldots \geq b_{K-1}}.
\end{equation}

Consequently, the predicted confidence scores or probability estimates of the $K-1$ tasks are decreasing, for instance, 

\begin{equation}
	\widehat{P}\left(y_i^{(1)}=1\right)\geq \widehat{P}\left(y_i^{(2)}=1\right) \geq \ldots \geq \widehat{P}\left(y_i^{(K-1)}=1\right)
\end{equation}

\noindent for all $i$, ensuring classifier consistency. Consequently, $\{f_k\}_{k=1}^{K-1}$ (Eq.~\ref{eq:proba-to-binary}) are also rank-monotonic.
\begin{theorem}[Ordered bias units]\label{th:ordered_thres}
 By minimizing the loss function defined in Eq.~\ref{eq:loss_fun}, the optimal solution $(\mathbf{W}^*,\mathbf{b}^*)$ satisfies $b_1^*\geq b_2^* \geq \ldots \geq b_{K-1}^*$. 
\end{theorem}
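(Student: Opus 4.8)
The plan is to exploit the fact that, although all $K-1$ tasks share the weight matrix $\mathbf{W}$, the bias units enter the loss $L$ (Eq.~\ref{eq:loss_fun}) in $K-1$ \emph{separate, additive} terms, so each $b_k$ can be optimized independently once $\mathbf{W}^*$ is fixed. Concretely, at the optimum $(\mathbf{W}^*,\mathbf{b}^*)$ the stationarity condition $\partial L/\partial b_k = 0$ must hold for every $k$. Writing $g_i := g(\mathbf{x}_i,\mathbf{W}^*)$ and differentiating the $k$-th summand, I would use $\frac{d}{db}\log\sigma(z+b) = 1-\sigma(z+b)$ and $\frac{d}{db}\log(1-\sigma(z+b)) = -\sigma(z+b)$; the cross terms in $\sigma(g_i+b_k)\,y_i^{(k)}$ cancel and the derivative collapses to $\lambda^{(k)}\sum_{i=1}^N\big(\sigma(g_i+b_k)-y_i^{(k)}\big)$. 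Since $\lambda^{(k)}>0$, setting this to zero yields the clean characterization $\sum_{i=1}^N \sigma(g_i+b_k^*) = \sum_{i=1}^N y_i^{(k)}$.

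The heart of the argument is then two monotonicity observations. First, the right-hand side $S_k := \sum_i y_i^{(k)} = \sum_i \mathbbm{1}\{y_i > r_k\}$ is non-increasing in $k$: because the ranks are ordered, the event $\{y_i > r_{k+1}\}$ implies $\{y_i > r_k\}$, so the count of examples exceeding the threshold can only shrink as $k$ grows, giving $S_1 \geq S_2 \geq \ldots \geq S_{K-1}$. Second, the map $\Phi(b) := \sum_i \sigma(g_i+b)$ on the left-hand side is strictly increasing in $b$, since each logistic sigmoid is. Combining these, $\Phi(b_k^*) = S_k \geq S_{k+1} = \Phi(b_{k+1}^*)$, and strict monotonicity of $\Phi$ forces $b_k^* \geq b_{k+1}^*$ for every $k$, which is exactly $b_1^* \geq \ldots \geq b_{K-1}^*$.

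The step I expect to require the most care is justifying that the stationarity conditions genuinely pin down $\mathbf{b}^*$ as the bias-minimizer: I would note that each per-task term $\ell_k(b)$ is a positively-weighted logistic loss, hence convex in $b$, so the first-order condition is both necessary and sufficient, and the decoupling across $k$ means $\mathbf{b}^*$ minimizes $L(\mathbf{W}^*,\cdot)$ coordinate-wise. A secondary technical point is the degenerate case: if $S_k=0$ or $S_k=N$ (a perfectly separable task) the minimizer is attained only in the limit $b_k^*\to\mp\infty$, in which case the inequalities must be read in the extended reals. Assuming a finite optimum, $0<S_k<N$ and $\Phi$ is a bijection onto $(0,N)$, so each $b_k^*$ is unique and the ordering is strict wherever $S_k > S_{k+1}$.
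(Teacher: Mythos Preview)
Your argument is correct and takes a genuinely different route from the paper. The paper proceeds by a swap/contradiction argument: assuming $b_k<b_{k+1}$ at an optimum, it partitions the training indices according to the pair $(y_n^{(k)},y_n^{(k+1)})$ and shows that replacing $b_k$ by $b_{k+1}$ or vice versa must strictly decrease $L$, so the assumed optimum cannot be optimal. Your approach instead exploits the additive separability of $L$ in $\mathbf{b}$ to derive the closed-form stationarity condition $\Phi(b_k^*)=S_k$, and then reads off the ordering from the monotonicity of $\Phi$ together with $S_k\ge S_{k+1}$. Your route is more direct and yields extra structural information (an explicit characterization of $b_k^*$, uniqueness when $0<S_k<N$, and a criterion for strict inequality), at the cost of invoking differentiability and convexity of the per-task logistic loss. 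The paper's swap argument is more elementary in that it never solves for $b_k^*$ and does not rely on first-order conditions, which makes it slightly more robust if one wanted to replace the sigmoid by a non-smooth monotone link; on the other hand, it has its own edge case when the partition cell $A_3$ is empty (then the key inequality is $\le 0$ rather than $<0$), which your analysis sidesteps. Both approaches tacitly assume a finite minimizer, and your handling of the degenerate $S_k\in\{0,N\}$ case in the extended reals is an appropriate remark.
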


\begin{proof}
Suppose $(\mathbf{W},b)$ is an optimal solution and {$b_k < b_{k+1}$} for some $k$. Claim: replacing $b_k$ with $b_{k+1}$ , or replacing $b_{k+1}$ with $b_k$, decreases the objective value $L$. Let 
\begin{align*}
& A_1 =\{n: y_n^{(k)}=y_n^{(k+1)}=1\}, 
\\ 
& A_2=\{n: y_n^{(k)} = y_n^{(k+1)}=0\}, \\
& A_3 =\{n: y_n^{(k)}=1, \, y_n^{(k+1)}=0\}.
\end{align*}
By the ordering relationship, we have 
\begin{equation*}
A_1\cup A_2 \cup A_3= \{1,2,\ldots,N\}. 
\end{equation*}
\noindent Denote $p_n(b_k)= \sigma(g(\mathbf{x}_n,\mathbf{W})+b_k)$ and 
\begin{align*}
& \delta_{n} =  \log(p_n(b_{k+1}))-\log(p_n(b_k)), \\
& \delta\,'_{n} = \log(1-p_n(b_{k}))-\log(1-p_n(b_{k+1})).
\end{align*}
Since $p_n(b_k)$ is increasing in $b_k$, we have
$\delta_{n}>0$ and $\delta\,'_n>0$. 

\noindent If we replace $b_k$ with $b_{k+1}$, the loss terms related to the $k$-th task are updated.
The change of loss $L$ (Eq.~\ref{eq:loss_fun}) is given as
\begin{equation*}
\Delta_1 L = \lambda^{(k)}\Big[- \sum_{n\in A_1} \delta_{n} + \sum_{n\in A_2} \delta\,'_n - \sum_{n\in A_3} \delta_{n}\Big].
\end{equation*}

\noindent Accordingly, if we replace $b_{k+1}$ with $b_k$, the change of $L$ is given as 
\begin{equation*}
 \Delta_2 L = \lambda^{(k+1)}\Big[\sum_{n\in A_1}\delta_{n} - \sum_{n\in A_2} \delta\,'_n - \sum_{n\in A_3} \delta\,'_n \Big].
\end{equation*}

\noindent By adding $\frac{1}{\lambda^{(k)}}\Delta_1 L$ and $\frac{1}{\lambda^{(k+1)}}\Delta_2 L$, we have
\begin{equation*}
    \frac{1}{\lambda^{(k)}}\Delta_1 L+ \frac{1}{\lambda^{(k+1)}}\Delta_2 L 
    = -\sum_{n\in A_3} (\delta_n + \delta\,'_n)<0,
\end{equation*}
and know that either $\Delta_1 L<0$ or $\Delta_2 L <0$.
 Thus, our claim is justified. We conclude that any optimal solution $(\mathbf{W}^*,b^*)$ that minimizes $L$ satisfies 
 \begin{equation*}
 	b_1^*\geq b_2^* \geq \ldots \geq b_{K-1}^*.
 \end{equation*}
\end{proof}

Note that the theorem for rank-monotonicity proposed by~\cite{li2007ordinal}, in contrast to Theorem~\ref{th:ordered_thres}, requires a cost matrix $\mathcal{C}$ with each row $y_n$ being convex. Under this convexity condition, let $\lambda_{y_n}^{(k)}=|\mathcal{C}_{y_n,r_k}-\mathcal{C}_{y_n,r_{k+1}}|$ be the weight of the loss associated with the $k$-th task on the $n$-th training example, which depends on the label $y_n$. \cite{li2007ordinal} proved that by using  training example-specific task weights $\lambda_{y_n}^{(k)}$,  the optimal thresholds are ordered -- ~\cite{niu2016ordinal} noted that example-specific task weights are infeasible in practice. Moreover, this assumption requires that $\lambda_{y_n}^{(k)}\geq \lambda_{y_n}^{(k+1)}$ when $r_{k+1}< y_n$ and $\lambda_{y_n}^{(k)}\leq \lambda_{y_n}^{(k+1)}$ when $r_{k+1}>y_n$. Theorem~\ref{th:ordered_thres} is free from this requirement and allows us to choose a fixed weight for each task that does not depend on the individual training examples, which greatly reduces the training complexity. Also, Theorem~\ref{th:ordered_thres} allows for choosing either a simple uniform task weighting or taking dataset imbalances into account under the guarantee of non-decreasing predicted probabilities and consistent task predictions. Under Theorem~\ref{th:ordered_thres}, the only requirement for guaranteeing rank monotonicity is that the task weights are non-negative.

\section{Experiments}
\label{sec:experiments}

\subsection{Datasets and preprocessing}
\label{sec:datasets}

The MORPH-2 dataset~\citep{ricanek2006morph}, containing 55,608 face images, was downloaded from \url{https://www.faceaginggroup.com/morph/} and preprocessed by locating the average eye-position in the respective dataset using facial landmark detection~\citep{sagonas2016300} and then aligning each image in the dataset to the average eye position using \textit{EyepadAlign} function in MLxtend~v0.14~\citep{raschka2018mlxtend}. The faces were then re-aligned such that the tip of the nose was located in the center of each image. The age labels used in this study were in the range of 16-70 years. 

The CACD dataset~\citep{chen2014cross} was downloaded from \url{http://bcsiriuschen.github.io/CARC/} and preprocessed similar to MORPH-2 such that the faces spanned the whole image with the nose tip at the center. The total number of images is 159,449 in the age range of 14-62 years. 

The Asian Face Database (AFAD) by~\cite{niu2016ordinal} was obtained from \url{https://github.com/afad-dataset/tarball}. The AFAD database used in this study contained 165,501 faces in the range of 15-40 years. Since the faces were already centered,  no further preprocessing was required.

Following the procedure described in~\cite{niu2016ordinal}, each image database was randomly divided into 80\% training data and 20\% test data. All images were resized to 128$\times$128$\times$3 pixels and then randomly cropped to 120$\times$120$\times$3 pixels to augment the model training. During model evaluation, the 128$\times$128$\times$3 RGB face images were center-cropped to a model input size of 120$\times$120$\times$3. 

We share the training and test partitions for all datasets, along with all preprocessing code used in this paper in the code repository (Section~\ref{sec:hardware-and-software}).

\subsection{Neural network architectures}
\label{sec:architecture}

To evaluate the performance of CORAL for age estimation from face images, we chose the ResNet-34 architecture~\citep{he2016deep}, which is a modern CNN architecture that achieves good performance on a variety of image classification tasks~\citep{goceri2019analysis}. For the remainder of this paper, we refer to the original ResNet-34 CNN with standard cross-entropy loss as {CE-CNN}. To implement a ResNet-34 CNN for ordinal regression using the proposed CORAL method, we replaced the last output layer with the corresponding binary tasks (Figure~\ref{fig:resnet}) and refer to this implementation as {CORAL-CNN}. Similar to {CORAL-CNN}, we modified the output layer of ResNet-34 to implement the ordinal regression reference approach described in \citep{niu2016ordinal}; we refer to this architecture as {OR-CNN}.


\subsection{Training and evaluation}

For model evaluation and comparison, we computed the mean absolute error (MAE) and root mean squared error (RMSE), on the test set after the last training epoch:

\begin{align*}
\text{MAE} &= \frac{1}{N}\sum_{i=1}^{N} \big|y_i - h(\mathbf{x}_i)\big|,  
\\ \text{RMSE} &= \sqrt{\frac{1}{N}\sum_{i=1}^{N} \big(y_i - h(\mathbf{x}_i)\big)^2},
\end{align*}
where $y_i$ is the ground truth rank of the $i$-th test example and $h(\mathbf{x}_i)$ is the predicted rank, respectively. 
 
The model training was repeated three times with different random seeds (0, 1, and 2) for model weight initialization, while the random seeds were consistent between the different methods to allow fair comparisons. Since this study focuses on investigating rank consistency, an extensive comparison between optimization algorithms is beyond the scope of this article, so that all CNNs were trained for 200 epochs with stochastic gradient descent via adaptive moment estimation~\citep{kingma2015adam}  using exponential decay rates $\beta_0=0.90$ and $\beta_2=0.99$ (default settings) and a batch size of 256. To avoid introducing empirical bias by designing our own CNN architecture for comparing the ordinal regression approaches, we adopted a standard architecture (ResNet-34~\citep{he2016deep};  Section~\ref{sec:architecture}) for this comparison. Moreover, we chose a uniform task weighting for the cross-entropy of $K-1$ binary classifiers in CORAL-CNN, for instance, we set $\forall k: \lambda^{(k)} = 1$ in Eq.~\ref{eq:loss_fun}.

The learning rate was determined by hyperparameter tuning on the validation set. For the various losses (cross-entropy, ordinal regression CNN~\citep{niu2016ordinal}, and the proposed CORAL method), we found that a learning rate of ${\alpha=5 \times 10^{-5}}$ performed best across all models, which is likely due to using the same base architecture (ResNet-34). All models were trained for 200 epochs. From those 200 epochs, the best model was selected via MAE performance on the validation set. The selected model was then evaluated on the independent test set, from which the reported MAE and RMSE performance values were obtained. For all reported model performances, we reported the best test set performance within the 200 training epochs. We provide the complete training logs in the source code repository (Section~\ref{sec:hardware-and-software}).

\subsection{Hardware and software}
\label{sec:hardware-and-software}
All loss functions and neural network models were implemented in PyTorch 1.5~\citep{paszke2019pytorch} and trained on NVIDIA GeForce RTX 2080Ti and Titan V graphics cards. The source code is available at \url{https://github.com/Raschka-research-group/coral-cnn}.


\begin{table*}
\begin{center}
	\caption{Age prediction errors on the test sets. All models are based on the ResNet-34 architecture.}
\begin{tabular}{|l|c|c|c|c|c|c|c|} 
\hline
\multirow{2}{*}{Method} & \multicolumn{1}{c|}{\multirow{2}{*}{\begin{tabular}[c]{@{}c@{}}Random\\Seed \end{tabular}}} & \multicolumn{2}{c|}{MORPH-2} & \multicolumn{2}{c|}{AFAD} & 
\multicolumn{2}{c|}{CACD} \\ 
\cline{3-8}
 & \multicolumn{1}{c|}{} & \multicolumn{1}{c|}{MAE} & \multicolumn{1}{c|}{RMSE} & MAE & \multicolumn{1}{c|}{RMSE} 
 & MAE & \multicolumn{1}{c|}{RMSE}\\
\hline
\multirow{4}{*}{\begin{tabular}[c]{@{}l@{}}CE-CNN\end{tabular}} 
 & 0 & 3.26 & 4.62 & 3.58 & 5.01 & 5.74 & 8.20 \\
 & 1 & 3.36 & 4.77 & 3.58 & 5.01 & 5.68 & 8.09 \\
 & 2 & 3.39 & 4.84 & 3.62 & 5.06 & 5.53 & 7.92  \\ 
\cline{2-8}
 & \multicolumn{1}{l|}{{ AVG $\pm$ SD}} & \multicolumn{1}{l|}{3.34 $\pm$ 0.07} & \multicolumn{1}{l|}{4.74 $\pm$ 0.11 } & \multicolumn{1}{l|}{3.60 $\pm$ 0.02} & \multicolumn{1}{l|}{5.03 $\pm$ 0.03} & 
 \multicolumn{1}{l|}{5.65 $\pm$ 0.11} & 
 \multicolumn{1}{l|}{8.07 $\pm$ 0.14}\\
\hline
\multirow{4}{*}{\begin{tabular}[c]{@{}l@{}}OR-CNN\\ {\small\citep{niu2016ordinal}} \end{tabular}} 
 & 0 & 2.87 & 4.08 & 3.56 & 4.80 & 5.36 & 7.61\\
 & 1 & 2.81 & 3.97 & 3.48 & 4.68 & 5.40 & 7.78\\
 & 2 & 2.82 & 3.87 & 3.50 & 4.78 & 5.37 & 7.70 \\ 
\cline{2-8}
 & \multicolumn{1}{c|}{{ AVG $\pm$ SD}} & \multicolumn{1}{c|}{2.83  $\pm$ 0.03} & \multicolumn{1}{c|}{3.97 $\pm$ 0.11} & \multicolumn{1}{c|}{3.51 $\pm$ 0.04} & \multicolumn{1}{c|}{4.75 $\pm$ 0.06} &
 \multicolumn{1}{l|}{5.38 $\pm$ 0.02}& \multicolumn{1}{l|}{7.70 $\pm$ 0.09}\\ 
\hline
\multirow{4}{*}{\begin{tabular}[c]{@{}l@{}}CORAL-CNN\\ (ours) \end{tabular}} 
 & 0 & 2.66 & 3.69 & 3.42 & 4.65 & 5.25 & 7.41 \\
 & 1 & 2.64 & 3.64 & 3.51 & 4.76 & 5.25 & 7.50 \\
 & 2 & 2.62 & 3.62 & 3.48 & 4.73 & 5.24 & 7.52\\ 
\cline{2-8}
 &\multicolumn{1}{l|}{{ AVG $\pm$  SD}} & \multicolumn{1}{l|}{\textbf{2.64 $\pm$ 0.02} } & \multicolumn{1}{l|}{\textbf{3.65 $\pm$ 0.04} } & \multicolumn{1}{l|}{\textbf{3.47 $\pm$ 0.05}} & \multicolumn{1}{l|}{\textbf{4.71 $\pm$ 0.06} } &
  \multicolumn{1}{l|}{\textbf{5.25 $\pm$ 0.01} } & 
 \multicolumn{1}{l|}{\textbf{7.48 $\pm$ 0.06} }\\
\hline
\end{tabular}
\label{tab:all-results}
\end{center}
\end{table*}

\section{Results and discussion}
\label{sec:results}

We conducted a series of experiments on three independent face image datasets for age estimation (Section~\ref{sec:datasets}) to compare the proposed CORAL method (CORAL-CNN) with the ordinal regression approach proposed by~\cite{niu2016ordinal} (OR-CNN). All implementations were based on the {ResNet-34} architecture, as described in Section \ref{sec:architecture}. We include the standard ResNet-34 classification network with cross-entropy loss (CE-CNN) as a performance baseline.

\begin{figure*}
\begin{center}
\centerline{\includegraphics[width=0.75\linewidth]{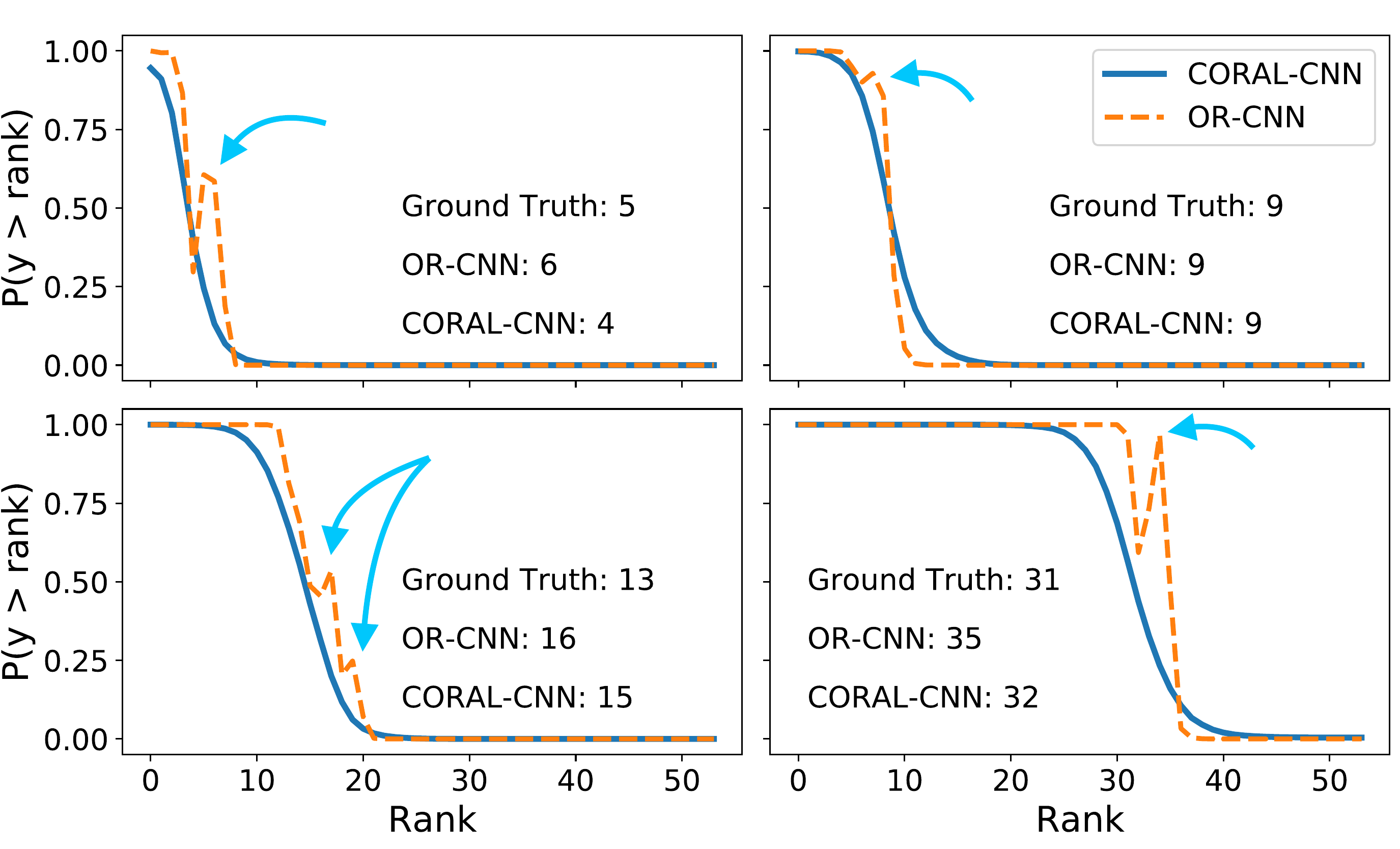}}
\caption{Graphs of the predicted probabilities for each binary classifier task on four different examples from the MORPH-2 test dataset. In all cases, OR-CNN suffers from one or more inconsistencies (indicated by arrows) in contrast to CORAL-CNN.}
\label{fig:inconsistency}
\end{center}
\end{figure*}

\begin{table*}
\centering
\caption{Average numbers of inconsistencies occurred on the different test datasets for CORAL-CNN and Niu et al's Ordinal CNN. The penultimate column and last column list the average numbers of inconsistencies focusing only on the correct and incorrect age predictions, respectively.}
\label{tab:inconsistency}
\begin{tabular}{|l|c|c|c|c|} 
\cline{2-5}
\multicolumn{1}{l|}{} & CORAL-CNN & OR-CNN \citep{niu2016ordinal} & OR-CNN \citep{niu2016ordinal}& OR-CNN \citep{niu2016ordinal}\\ 
\multicolumn{1}{l|}{} & All predictions & All predictions & Only correct predictions & Only incorrect predictions \\ 
\hline
\textbf{Morph}                 &             &               &                       &                      \\
Seed 0                & 0           & 2.28          & 1.80                  & 2.37                 \\
Seed 1                & 0           & 2.08          & 1.70                  & 2.15                 \\
Seed 2                & 0           & 0.86          & 0.65                  & 0.89                 \\ 
\hline
\textbf{AFAD}                  &             &               &                       &                      \\
Seed 0                & 0           & 1.97          & 1.88                  & 1.98                \\
Seed 1                & 0           & 1.91          & 1.81                  & 1.92                 \\
Seed 2                & 0           & 1.17        & 1.02                 & 1.19                \\ 
\hline
\textbf{CACD}                  &             &               &                       &                      \\
Seed 0                & 0           & 1.24          & 0.98                  & 1.26                 \\
Seed 1                & 0           & 1.68         & 1.29                 & 1.71                 \\
Seed 2                & 0           & 0.80         & 0.63                  & 0.81                 \\
\hline
\end{tabular}
\end{table*}

\subsection{Estimating the apparent age from face images}

Across all ordinal regression datasets (Table~\ref{tab:all-results})
we found that both OR-CNN and CORAL-CNN outperform the standard cross-entropy classification loss (CE-CNN), which does not utilize the rank ordering information. Similarly, as summarized in Table~\ref{tab:all-results}, the proposed rank-consistent CORAL method shows a substantial performance improvement over OR-CNN~\citep{niu2016ordinal}, which does not guarantee classifier consistency.

Moreover, we repeated each experiment three times using different random seeds for model weight initialization and dataset shuffling to ensure that the observed performance improvement of CORAL-CNN over OR-CNN is reproducible and not coincidental. We can conclude that guaranteed classifier consistency via CORAL has a noticeable positive effect on the predictive performance of an ordinal regression CNN (a more detailed analysis of the OR-CNN's rank inconsistency is provided in Section~\ref{sec:inconsistencies}).

For all methods (CE-CNN, CORAL-CNN, and OR-CNN), the overall performance on the different datasets appeared in the following order: MORPH-2 $>$ AFAD $>$ CACD (Table~\ref{tab:all-results}). A possible explanation is that MORPH-2 has the best overall image quality, and the photos were taken under relatively consistent lighting conditions and viewing angles. For instance, we found that AFAD includes images with very low resolutions (for example, 20x20). CACD also contains some lower-quality images. Because CACD has approximately the same size as AFAD, the overall lower performance achieved on this dataset may also be explained by the wider age range that needs to be considered (CACD: 14-62 years, AFAD: 15-40 years).

\subsection{Empirical rank inconsistency analysis}
\label{sec:inconsistencies}

By design, our proposed CORAL guarantees rank consistency (Theorem~\ref{th:ordered_thres}). In addition, we analyzed the rank inconsistency empirically for both CORAL-CNN and OR-CNN (an example of rank inconsistency is shown in Figure~\ref{fig:inconsistency}). Table~\ref{tab:inconsistency} summarizes the average numbers of rank inconsistencies for the OR-CNN and CORAL-CNN models on each test dataset. As expected, CORAL-CNN has 0 rank inconsistencies. When comparing the average numbers of rank inconsistencies considering only those cases where OR-CNN predicted the age correctly versus incorrectly, the average number of inconsistencies is higher when OR-CNN makes wrong predictions. This observation can be seen as evidence that rank inconsistency harms predictive performance. Consequently, this finding suggests that addressing rank inconsistency via CORAL is beneficial for the predictive performance of ordinal regression CNNs.

\section{Conclusions}
\label{sec:conclusions}

In this paper, we developed the CORAL framework for ordinal regression via extended binary classification with theoretical guarantees for classifier consistency. Moreover, we proved classifier consistency without requiring rank- or training label-dependent weighting schemes, which permits straightforward implementations and efficient model training. 
CORAL can be readily implemented to extend common CNN architectures for ordinal regression tasks. The experimental results showed that the CORAL framework substantially improved the predictive performance of CNNs for age estimation on three independent age estimation datasets. Our method can be readily generalized to other ordinal regression problems and different types of neural network architectures, including multilayer perceptrons and recurrent neural networks.

\section{Acknowledgments}

This research was supported by the Office of the Vice Chancellor for Research and Graduate Education at the University of Wisconsin-Madison with funding from the Wisconsin Alumni Research Foundation. Also, we thank the NVIDIA Corporation for a GPU grant to support this study.

\FloatBarrier
\clearpage
\bibliographystyle{model2-names}
\bibliography{refs}

\FloatBarrier
\clearpage

\section{Supplementary Material}

\subsection{Generalization Bounds}

Based on well-known generalization bounds for binary classification, we can derive new generalization bounds for our ordinal regression approach that apply to a wide range of practical scenarios as we only require $C_{y,r_k} = 0 \text{ if } r_k=y$ and  $C_{y,r_k} > 0 \text{ if } r_k \neq y$. Moreover, Theorem~\ref{th:gener-error} shows that if each binary classification task in our model generalizes well in terms of the standard 0/1-loss, the final rank prediction via $h$ (Eq.~\ref{eq:predicted-label}) also generalizes well.
\begin{theorem}[reduction of generalization error]\label{th:gener-error}
Suppose $\mathcal{C}$ is the cost matrix of the original ordinal label prediction problem, with $\mathcal{C}_{y,y}=0$ and $\mathcal{C}_{y,r_k}>0$ for $k\neq y$. $P$ is the underlying distribution of $(\mathbf{x},y)$, for instance, $(\mathbf{x},y)\sim P$.  If the binary classification rules $\{f_k\}_{k=1}^{K-1}$ obtained by optimizing Eq.~\ref{eq:loss_fun} are rank-monotonic, then

\begin{equation}\label{eq:gen-bound1}
\resizebox{.99\hsize}{!}{$\underset{(\mathbf{x},y)\sim P}{\mathbb{E}}\mathcal{C}_{y,h(\mathbf{x})}
      \leq 
     \sum_{k=1}^{K-1}\big|\mathcal{C}_{y,r_k}-\mathcal{C}_{y,r_{k+1}}\big| \underset{(\mathbf{x},y)\sim P}{\mathbb{E}}\mathbbm{1}\{f_k(\mathbf{x})\neq y^{(k)}\}$}.
\end{equation}

\end{theorem}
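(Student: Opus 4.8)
The plan is to reduce the stated bound to a \emph{pointwise} inequality that holds for every fixed pair $(\mathbf{x},y)$ and then integrate against $P$. Since expectation is monotone, it suffices to prove that
\[
\mathcal{C}_{y,h(\mathbf{x})} \leq \sum_{k=1}^{K-1}\big|\mathcal{C}_{y,r_k}-\mathcal{C}_{y,r_{k+1}}\big|\,\mathbbm{1}\{f_k(\mathbf{x})\neq y^{(k)}\}
\]
for each $(\mathbf{x},y)$, after which taking $\mathbb{E}_{(\mathbf{x},y)\sim P}$ on both sides yields Eq.~\ref{eq:gen-bound1}. Fix $(\mathbf{x},y)$ and write $y=r_s$, so that the extended labels are $y^{(k)}=\mathbbm{1}\{k<s\}$. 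Here the rank-monotonicity hypothesis does the essential work: because $f_1(\mathbf{x})\geq\cdots\geq f_{K-1}(\mathbf{x})$ take values in $\{0,1\}$, the predicted index $q=1+\sum_{k}f_k(\mathbf{x})$ of Eq.~\ref{eq:predicted-label} has the clean threshold form $f_k(\mathbf{x})=\mathbbm{1}\{k<q\}$.

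First I would identify the set of misclassified binary tasks. Comparing $f_k(\mathbf{x})=\mathbbm{1}\{k<q\}$ with $y^{(k)}=\mathbbm{1}\{k<s\}$, the index $k$ satisfies $f_k(\mathbf{x})\neq y^{(k)}$ exactly when $k$ lies strictly between $s$ and $q$; concretely the disagreement set is $\{s,\ldots,q-1\}$ when $q>s$, is $\{q,\ldots,s-1\}$ when $q<s$, and is empty when $q=s$. The key structural fact, and the reason rank-monotonicity is indispensable, is that this set is a \emph{contiguous} block of indices spanning the gap between the true threshold $s$ and the predicted threshold $q$; without monotonicity the disagreement indices could be scattered and the telescoping step below would break down.

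Next I would telescope the cost. Using $\mathcal{C}_{y,y}=\mathcal{C}_{r_s,r_s}=0$, I write $\mathcal{C}_{r_s,r_q}=\mathcal{C}_{r_s,r_q}-\mathcal{C}_{r_s,r_s}$ as a sum of consecutive differences $\mathcal{C}_{r_s,r_{j+1}}-\mathcal{C}_{r_s,r_j}$ over the indices $j$ running between $s$ and $q$. Since the left side is a nonnegative cost, applying the triangle inequality to this telescoping sum gives
\[
\mathcal{C}_{r_s,r_q}\leq\sum_{j}\big|\mathcal{C}_{r_s,r_j}-\mathcal{C}_{r_s,r_{j+1}}\big|,
\]
where $j$ ranges over precisely the contiguous block identified above. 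But that block is exactly the misclassification set, so the right-hand side equals $\sum_{k=1}^{K-1}\big|\mathcal{C}_{r_s,r_k}-\mathcal{C}_{r_s,r_{k+1}}\big|\,\mathbbm{1}\{f_k(\mathbf{x})\neq y^{(k)}\}$, establishing the pointwise inequality; the case $q=s$ is immediate since the left side is then $0$. Taking expectations over $(\mathbf{x},y)\sim P$ finishes the argument.

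I expect the main obstacle to be careful bookkeeping rather than depth: one must treat the two orderings $q>s$ and $q<s$ symmetrically and verify in each case that the telescoping index range coincides exactly with the disagreement set. It is worth emphasizing that only the triangle inequality is invoked, so no V-shaped or convex-row assumption on $\mathcal{C}$ is required---merely $\mathcal{C}_{y,y}=0$ and $\mathcal{C}_{y,r_k}>0$ for $r_k\neq y$---which is exactly what makes the bound applicable to the broad class of cost matrices claimed in the statement.
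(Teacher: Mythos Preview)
Your proposal is correct and follows essentially the same route as the paper: establish the pointwise inequality by using rank-monotonicity to identify the disagreement set as the contiguous block between $q$ and $s$, telescope $\mathcal{C}_{y,r_q}-\mathcal{C}_{y,r_s}$ over that block, bound by absolute values, and then take expectations. The paper carries out the two cases $q<s$ and $q>s$ explicitly just as you outline, and likewise remarks that no V-shaped or convex-row hypothesis is needed.
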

\begin{proof}
For any $\mathbf{x}\in \mathcal{X}$, we have
\begin{equation*}
    f_1(\mathbf{x})\geq f_2(\mathbf{x}) \geq \ldots \geq f_{K-1}(\mathbf{x}).
\end{equation*}

\noindent If $h(\mathbf{x})=y$, then $\mathcal{C}_{y,h(\mathbf{x})}=0$.\\
If $h(\mathbf{x})=r_q\prec y=r_s$, then $q<s$. We have 
\begin{equation*}
f_1(\mathbf{x})=f_2(\mathbf{x})=\ldots=f_{q-1}(x)=1	
\end{equation*}
\noindent and 
\begin{equation*}
f_q(\mathbf{x})=f_{q+1}(\mathbf{x})=\ldots=f_{K-1}(\mathbf{x})=0.
\end{equation*}
\noindent Also, 
\begin{equation*}
y^{(1)}=y^{(2)}=\ldots=y^{(s-1)}=1
\end{equation*}
and 
\begin{equation*}
y^{(s)}=y^{(s+1)}=\ldots=y^{(K-1)}=0.
\end{equation*}
Thus, $\mathbbm{1}\{f_k(\mathbf{x})\neq y^{(k)}\}=1$ if and only if $q\leq k\leq s-1$. Since $\mathcal{C}_{y,y}=0,$

\begin{align*}
    \mathcal{C}_{y,h(\mathbf{x})} & =  \sum_{k=q}^{s-1}(\mathcal{C}_{y,r_k}-\mathcal{C}_{y,r_{k+1}})\cdot \mathbbm{1}\{f_k(\mathbf{x})\neq y^{(k)}\} \\
    & \leq  \sum_{k=q}^{s-1}\big|\mathcal{C}_{y,r_k}-\mathcal{C}_{y,r_{k+1}}\big|\cdot \mathbbm{1}\{f_k(\mathbf{x})\neq y^{(k)}\} \\
    & \leq  \sum_{k=1}^{K-1}\big|\mathcal{C}_{y,r_k}-\mathcal{C}_{y,r_{k+1}}\big|\cdot \mathbbm{1}\{f_k(\mathbf{x})\neq y^{(k)}\}.
\end{align*}

Similarly, if $h(x)=r_q\succ y=r_s$, then $q>s$ and
\begin{align*}
    \mathcal{C}_{y,h(\mathbf{x})}  &=  \sum_{k=s}^{q-1}(\mathcal{C}_{y,r_{k+1}}-\mathcal{C}_{y,r_{k}})\cdot \mathbbm{1}\{f_k(\mathbf{x})\neq y^{(k)}\} \\
    & \leq \sum_{k=1}^{K-1}\big|\mathcal{C}_{y,r_{k+1}}-\mathcal{C}_{y,r_{k}}\big|\cdot \mathbbm{1}\{f_k(\mathbf{x})\neq y^{(k)}\}.
\end{align*}
In any case, we have 
\begin{equation*}
    \mathcal{C}_{y,h(\mathbf{x})}\leq \sum_{k=1}^{K-1}\big|\mathcal{C}_{y,r_k}-\mathcal{C}_{y,r_{k+1}}\big|\cdot \mathbbm{1}\{f_k(\mathbf{x})=y^{(k)}\}.
\end{equation*}
By taking the expectation on both sides with $(\mathbf{x},y)\sim P$, we arrive at Eq.~\eqref{eq:gen-bound1}.
\end{proof}

 In \cite{li2007ordinal}, by assuming the cost matrix to have V-shaped rows, the researchers define generalization bounds by constructing a discrete distribution on $\{1,2,\ldots,K-1\}$ conditional on each $y$, given that the binary classifications are rank-monotonic or every row of $\mathcal{C}$ is convex. However, the only case they provided for the existence of rank-monotonic binary classifiers was the ordered threshold model, which requires a cost matrix with convex rows and example-specific task weights. In other words, when the cost matrix is only V-shaped but does not meet the convex row condition, for instance, $\mathcal{C}_{y,r_k}-\mathcal{C}_{y,r_{k-1}}>\mathcal{C}_{y,r_{k+1}}-\mathcal{C}_{y,r_k}>0$ for some $r_k>y$, the method proposed in \cite{li2007ordinal} did not provide a practical way to bound the generalization error. 
Consequently, our result does not rely on cost matrices with V-shaped or convex rows and can be applied to a broader variety of real-world use cases.

\end{document}